\newcommand{\fw}{FLAKE}
\theoremstyle{plain}
\newtheorem{theorem}{Theorem}[section]
\theoremstyle{definition}
\theoremstyle{remark}
\title{A Privacy-Preserving Federated Learning Approach for Kernel methods}
\author{%
  Anika Hannemann \thanks{Dept. of Computer Science, Leipzig University} \hspace{0.05mm} \thanks{Center for Scalable Data Analytics and Artificial Intelligence (ScaDS.AI) Dresden/Leipzig} \\
  \texttt{anika.hannemann@informatik.uni-leipzig.de} \\
   \And
   Ali Burak Ünal \thanks{Medical Data Privacy Preserving Machine Learning (MDPPML), University of Tuebingen} \hspace{0.05mm} \thanks{Institute for Bioinformatics and Medical Informatics (IBMI), University of Tuebingen} \\
   \texttt{ali-burak.uenal@uni-tuebingen.de} \\
    \And
    Arjhun Swaminathan \footnotemark[3] \hspace{0.05mm} \footnotemark[4] \\
    \texttt{arjhun.swaminathan@uni-tuebingen.de} \\
    \And
    Erik Buchmann \footnotemark[1] \hspace{0.05mm} \footnotemark[2] \\
    \texttt{buchmann@informatik.uni-leipzig.de} \\
   \And
   Mete Akgün \footnotemark[3] \hspace{0.05mm} \footnotemark[4] \\
   \texttt{mete.akguen@uni-tuebingen.de} \\
}
\begin{document}

\maketitle

\begin{abstract}
It is challenging to implement Kernel methods, if the data sources are distributed and cannot be joined at a trusted third party for privacy reasons. It is even more challenging, if the use case rules out privacy-preserving approaches that introduce noise. 
An example for such a use case is machine learning on clinical data. To realize exact privacy preserving computation of kernel methods, we propose FLAKE, a Federated Learning Approach for KErnel methods on horizontally distributed data. With FLAKE, the data sources mask their data so that a centralized instance can compute a Gram matrix without compromising privacy. 
The Gram matrix allows to calculate many kernel matrices, which can be used to train kernel-based machine learning algorithms such as Support Vector Machines. We prove that FLAKE prevents an adversary from learning the input data or the number of input features under a \textit{semi-honest} threat model. Experiments on clinical and synthetic data confirm that FLAKE is outperforming the accuracy and efficiency of comparable methods. The time needed to mask the data and to compute the Gram matrix is several orders of magnitude less than the time a Support Vector Machine needs to be trained. Thus, FLAKE can be applied to many use cases. 
\end{abstract}

\section{Introduction}
Kernel methods are a prominent class of machine learning algorithms. However, in many real-world scenarios, kernel methods such as Support Vector Machines (SVM) cannot be readily applied, because the data sources are inherently distributed, but the data is private and cannot be shared freely. 
Consider a machine learning scenario, where a Kernel method on medical data is to be used to develop effective treatments, or to identify risk factors for certain diseases.
The input data is collected from multiple hospitals, and it carries sensible medical information that must be kept private. In this scenario it is impossible to apply noise, because neither the patient nor the physician can accept stochastic results. The delay due to processing strong cryptography on a large data set in multiple rounds of a Secure-Multiparty Computation Protocol is also unacceptable. 

Existing work in the field of Secure-Multiparty Computation~\citep{mugunthan2019smpai,zhang2022augmented} or Privacy-Aware Federated Learning~\citep{pfitzner2021federated,adnan2022federated,malekzadeh2021dopamine} can be categorized in three approaches based on \textbf{(1)} encryption \textbf{(2)} differential privacy or \textbf{(3)} randomized masking \citep{monreale2016privacy}. The first two either apply strong cryptography or add noise to private data which is a severe restriction for many use cases.
In this paper, we focus on the third approach using randomized masking. 
We present FLAKE, our Federated Learning Approach for KErnel methods. FLAKE computes the Gram matrix over distributed data sources that store horizontally partitioned data. The Gram matrix allows various kernel matrices to be computed and kernel-based machine learning algorithm to be trained as if the training takes place on centralized data. Examples for such algorithms include Support Vector Machines, Gaussian processes, kernel k-means, and more. To ensure privacy, FLAKE masks the input data at the sources. FLAKE ensures that the resulting Gram matrix is exact. In order to update the Gram matrix, only a fraction of the values need to be re-calculated. Thus, inference and update are inexpensive operations. 
%
We make three contributions: 
\begin{compactenum}
\item We introduce the FLAKE protocol, which allows a function party to privately compute a Gram matrix on masked input data from multiple input parties. 
\item We prove that both the input data and the number of features is kept private, unless function party and input parties collude and share unmasked data. 
\item We evaluate FLAKE by experiments with medical and synthetic data. 
\end{compactenum}

Our formal analysis and our experiments confirm that FLAKE has the potential to open up new fields of application for Kernel-based methods on horizontally partitioned data, that must be kept private, but must be analyzed with an exact approach. 

Paper structure: 
Section~\ref{sec:background} reviews related work, followed by a description of FLAKE in Section~\ref{sec:FLAKE}. Section~\ref{sec:proofs} analyzes the privacy properties of the protocol. Section~\ref{sec:experiments} contains the experimental evaluation. 
Finally, Section~\ref{sec:conclusion} concludes.

\section{Related Work}
\label{sec:background}

\subsection{Kernel-based Methods}

Kernel-based machine learning algorithms have a well-established mathematical background. They are among the well-performing machine learning algorithms and are widely utilized in various applications \citep{morota2014kernel,haywood2021kernel}. They can learn non-linear patterns in the data efficiently thanks to the kernel trick: the data is represented by a set of pairwise similarity comparisons, the kernel values, instead of explicitly mapping them into higher dimensions, where linear classification can be done. To compute these kernel values, one can use several different kernel functions such as linear, polynomial, and radial basis function (RBF). Both polynomial and RBF kernels can be computed by using the kernel matrix of linear kernel, which is the Gram matrix. The Gram matrix is a positive semi-definite matrix and its entries indicate the dot product of the corresponding samples' feature vectors. 
Therefore, we can formulate both kernels such that they are computable by using the entries of the Gram matrix. For instance, the polynomial kernel can be written as $k(x,y) = (x^Ty + v)^p$ where $v \leq 0$ is a trade-off parameter and $p \in \mathbb{N}$ is the degree of the polynomial. Similarly, the RBF kernel can be formulated as $k(x,y) = exp(-\dfrac{{\|x^Tx - 2 x^Ty + y^Ty\|}^2}{2 \sigma^2})$ where $\sigma \leq 0$ is the similarity adjustment parameter. In \fw{}, we will benefit from this observation to compute the desired kernel matrices from the Gram matrix.

\subsection{Federated Learning}
 Introduced by  \citep{mcmahan2017communication}, Federated Learning (FL) allows users to reap the benefits of modeling on rich yet sensitive data stored on distributed nodes. In conventional machine learning, a model $\mathcal{M}$ is trained by the centralized data $\mathcal{D}_{cent}$. However, due to privacy concerns, the data is not allowed to leave the nodes. FL addresses this problem. Participating nodes $\mathcal{N}_1, ..., \mathcal{N}_n$ in FL aim to collaboratively train the model $\mathcal{M}$ without revealing their data to other nodes. In FL, every node $\mathcal{N}_i$ trains a local model $\mathcal{M}_i$ on its respective data set $\mathcal{D}_i$ and subsequently shares the model parameters with a central server. The central server then aggregates the received model parameters to obtain a global model $\mathcal{M}_{fed}$ with an accuracy of $acc_{fed}$. As more data is collected, the process is repeated, with each node updating its local model and forwarding the updated parameters to the central server. Thus, the data does not leave its origin at any time in the computation. At some point in the iteration of FL, if $\vert acc_{fed} - acc_{cent} \vert \leq \delta$, then the Federated Learning framework is said to have $\delta$-accuracy loss. The goal in FL is to have less accuracy loss while maintaining efficiency and the data's privacy.

The privacy of the aggregated models can be ensured in different ways.
Approaches based on \textbf{encryption (1)} like homomorphic encryption (HE) \citep{wibawa2022homomorphic,stripelis2021secure} aim to protect the privacy of aggregated models by encrypting individual models, but HE is computationally heavy and limited in functionality. Another cryptographic approach is secure multi-party computation (SMC) \citep{mugunthan2019smpai,zhang2022augmented}, which allows multiple parties to jointly compute on private data without revealing it, but SMC still requires significant execution time due to communication overhead. 

FL studies utilizing methods based on \textbf{differential privacy (2)} (DP), protect the privacy of the aggregated model by adding noise to the individual models, making it impossible to restore the original model or to inference information about a data point's membership. However, this usually involves a cutback in accuracy \citep{pfitzner2021federated,adnan2022federated,malekzadeh2021dopamine}.

The \textbf{randomized masking approach (3)} for FL was used by \citep{chen2005privacy, chen2007towards} who propose a geometric perturbation approach to preserve data privacy in classification tasks by hiding content while maintaining dot product and Euclidean distance relationships. To provide even stronger security, \citep{lin2015secure} utilize a random linear transformation scheme that requires the data owner to send perturbed data to the service provider for training SVM classifiers. Lin also applies perturbation for clustering tasks using a randomized kernel matrix to hide dot product and distance information \citep{lin2013privacy}. Another randomization technique using Bloom filters enables outsourcing of mining association rules while protecting business intelligence and customer privacy, but only supports approximate reconstruction of mined frequent item sets by the data owner \citep{qiu2008protecting}. \citep{yu2006privacy} introduce random kernels where the original data gets transformed using random linear transformation. However, due to the nature of approximation and introduction of noise, they all suffer from performance loss to provide privacy. 
\citep{unal2021escaped} provide an exact protocol and is, therefore, the closest study to our approach. Here, the data sources first have to communicate with each other to mask their data. Then they send these masked samples to enable the cloud so that it can compute the desired kernel-based machine learning algorithm. However, due to the utilized encoding technique in ESCAPED, one has to run the protocol from scratch whenever there is new data in any party that needs to be integrated into the model or there is a new party involved in the computation. 

\section{FLAKE}
\label{sec:FLAKE}

This section explains FLAKE, our privacy-aware Federated Learning Approach for KErnel methods. 

\subsection{Scenario Definition}

We assume a multi-party scenario consisting of multiple input parties (Alice, Bob, Charlie for simplicity) and one function party. Alice, Bob and Charlie hold sensitive data that is horizontally partitioned, i.e., each input party stores the same schema with different training data. The function party performs Federated Learning iteratively on a (possibly large) set of input-data chunks.\\ We consider a fully untrusted setting where the input data must not leave their origin. Formally, we assume an arbitrary subset of semi-honest input parties and function party where no party colludes with another one. Note that this leaves aside extreme data distributions or all-zero cases where properties of the training data of one or more input parties can be guessed, or where only one input party exist.
Therefore, FLAKE needs to deal with four requirements: 
\begin{tabbing}
    Updatability: \=  \kill
\emph{Privacy:} \> The function party or an input party cannot learn the data of another input party, and \\ \>the number of features is kept private from the function party.\\
\emph{Accuracy:} \> The accuracy of the federated model must be as good as that of the centralized one.\\
\emph{Updatability:} \> It must be possible to update the model with new data.\\
\emph{Efficiency:} \> Communication costs and execution time must be feasible for our scenario.\\
\end{tabbing}

\subsection{The FLAKE Protocol}


\begin{figure*}
    \centering
    \includegraphics[width=0.8\linewidth]{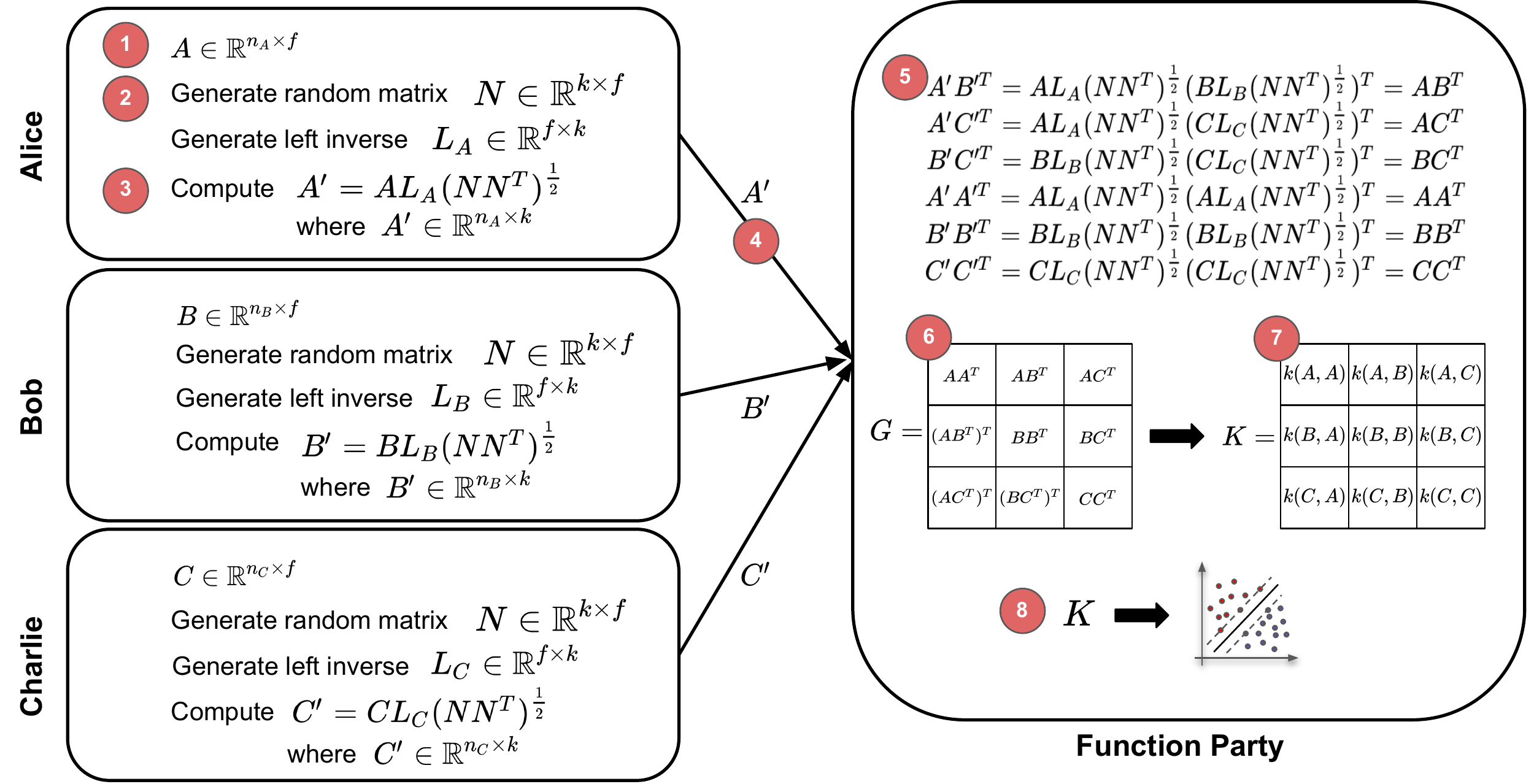}
    \vskip 0.1in
    \caption{Masking and Training with FLAKE: \textbf{(1)} Initially, each party has its own data. \textbf{(2)} A random matrix $N$ and its left inverse $L$ are computed, based on a shared seed. \textbf{(3)} According to protocol, the data gets masked. \textbf{(4)} Masked data is transferred. \textbf{(5)} The function party computes the dot products. \textbf{(6)} The Gram matrix is formed by the dot products and their transposed values.  \textbf{(7)} A kernel matrix is computed from the Gram matrix. \textbf{(8)} Finally, a classifier is trained.}  
    \label{fig:flake_overview}
\end{figure*}


FLAKE computes the Gram matrix of samples from different input parties to enable the training and testing of kernel-based machine learning algorithms. This takes place in three stages \textit{Distribution of Seed}, \textit{Masking and Training} and \textit{Inference and Updating}.


\paragraph{Distribution of Seed} \label{stage0}
FLAKE relies on a  Public Key Infrastructure, which delivers each input party the public signing keys for all other input parties. To initiate the process, one input party is randomly selected as the leader and generates a random seed. The leader then shares this seed with the other input parties using public-key encryption and digital signatures. The function party is a natural choice for the task of the aggregator, which transmits encrypted messages between input parties, but cannot decrypt or modify these messages. We assume a trusted third party for the distribution of public keys. This is a common assumption in frameworks for privacy-preserving federated learning \citep{bonawitz2017practical,zhang2020batchcrypt}. 


\paragraph{Masking and Training} \label{stage1}

The objective of this stage is to let the function party compute a Gram matrix without learning the data from the input parties (Requirement \emph{Privacy}). 
The Gram matrix $G$ of the data matrices $A, B, C$ provided by Alice, Bob, and Charlie is the matrix of all possible inner products $AB^T, AC^T, CA^T,...$. 
For better understanding, we introduce the private calculation of $AB^T$ given $A \in \mathbb{R}^{n_A\times f}$ and $B \in \mathbb{R}^{n_B \times f}$ where $f > 1$. The following protocol reveals $AB^T$ to the function party while hiding the input data and the number of features:

First, Alice and Bob calculate a random full-rank matrix $N \in \mathbb{R}^{k \times f}$ for some $k> f$, based on a shared seed. Throughout all input parties and training iterations, $N$ remains constant. Since $rank(N)=f$, there exists a non-unique matrix $L\in\mathbb{R}^{f \times k}$ such that $LN = I_{f\times f}$, that can be computed using the singular value decomposition (SVD) of N, and is called the Moore-Penrose pseudoinverse. SVD allows us to write $N$ as $N=USV^T$ with $U, V$ being orthogonal matrices and $S$ being a diagonal matrix. The inverse of $N$ can be determined from the SVD: $L=N^{-1}=US^{-1}V^T$. Here, $S^{-1}$ can be derived by taking the multiplicative inverse of every entry of $S$. 
Now, Alice computes a independent left inverse $L_A$ such that $L_AN=I$ and Bob $L_B$ such that $L_BN=I$. Then, the data gets masked by Alice $A'=AL_A(NN^T)^{\frac{1}{2}} \in \mathbb{R}^{n_A \times k}$ while Bob masks his data accordingly $B'=BL_B(NN^T)^{\frac{1}{2}}\in \mathbb{R}^{n_B \times k}$. 
Figure~\ref{fig:flake_overview} illustrates this. 


$A'$ and $B'$ are forwarded to the function party, which only reveals $n_a$ and $n_b$, respectively, and the Gram matrix of $A$ and $B$ when $A'B'^T$ is computed. The function party computes the dot product $AB^T$, as shown in Figure~\ref{fig:flake_overview}. Then, the function party can compute the desired kernel matrix using the Gram matrix and perform training and testing of the designated kernel-based machine learning algorithm. The remaining entries of the Gram matrix are masked analogously. When dealing with more than three parties, the Gram matrix has to be extended correspondingly.

\paragraph{Inference and Updating}
\label{stage2}

To integrate new data without having to rebuild the model from scratch (Requirement \emph{Updatability}), FLAKE provides a protocol for inference and updating the Gram matrix. 
We can distinguish two cases: First, one of the input parties may have received new input data. Second, a new input party shall be integrated into the computation. 
For simplicity, we again explain our protocol with three parties Alice, Bob and Charlie with their respective data sets $A, B, C$.

Assume $C$ has new data $X$ which must be integrated into the Gram matrix shown in Table~\ref{t:tabularnote}. $X$ is the data set to be used for updating the model. To extend the gram matrix with the new values from $C$, the function party only needs to have the entries in the dashed rectangles. The party $C$ uses the aforementioned masking and sending approaches for this purpose. 
Now assume that a new input party needs to be added. In this case, the function party must calculate the values in the continuous rectangles in Table~\ref{t:tabularnote}. The remaining new entries can be computed locally by $C$. In both cases, updating the Gram matrix means that the function party has to calculate only a small set of new values. The vast majority of values need to be calculated just once, and a large share of the calculation effort remains at the input parties.  
Note that $X$ can be also a test data set. 

When a party wants to leave the consortium the function party deletes all random components coming from this party and gram matrix entries that are calculated using these random components. This is important for compliance with legal regulations such as General Data Protection Regulation (GDPR) \citep{gdpr}. It can be seen as an application of machine unlearning. In current FL methods, it is unclear and difficult how to eliminate a party's contribution to the collaboratively trained ML model.

\begin{table}[h]
\setlength{\belowcaptionskip}{1ex}
\centering
\small
\caption{Gram matrix of three-input parties.}
\label{t:tabularnote}
{%
\NiceMatrixOptions{cell-space-limits = 1mm}
\begin{NiceTabular}{*5{c}}[name=MyTbl5]

    \hline
    Input Parties & A & B & C & X\\\hline
    A   & $AA^T$   & $AB^T$ & $AC^T$ & $AX^T$  \\
    B   & $BA^T$   & $BB^T$  & $BC^T$  & $BX^T$ \\ 
    C   & $CA^T$   & $CB^T$  & $CC^T$  & $CX^T$  \\
    X   & $XA^T$   & $X B^T$  & $XC^T$   & $XX^T$ \\ \hline
    
    \CodeAfter
        \tikz \node[draw, color=red, rounded corners, inner ysep=0.25mm, rectangle, fit=(MyTbl5-5-2) (MyTbl5-5-3)] {};
        \tikz \node[draw, color=red, rounded corners, inner ysep=0.25mm, inner xsep=2mm, rectangle, fit=(MyTbl5-2-5) (MyTbl5-3-5)] {};   

        \tikz \node[draw, color=blue, dashed, rounded corners, inner ysep=-0.5mm, rectangle, fit=(MyTbl5-5-2) (MyTbl5-5-4)] {};
        \tikz \node[draw, dashed, color=blue, rounded corners, inner ysep=-0.5mm, inner xsep=0.5mm, rectangle, fit=(MyTbl5-2-5) (MyTbl5-4-5)] {};   
\end{NiceTabular}%
}
\end{table}

\section{Analysis of Privacy Properties}
\label{sec:proofs}


\subsection{Privacy Definition}
We consider the semi-honest (or honest-but-curious) adversary model. 
In a multi-party scenario, a \textbf{semi-honest adversary}~\citep{evans2018pragmatic} corrupts an arbitrary subset of the parties involved. The corrupted parties follow the multi-party protocol as specified, i.e., the output of the protocol is correct. The corrupted parties try to learn private data from the messages they receive from uncorrupted parties. At the end of the protocol, the corrupted parties are allowed to share their information. 

FLAKE consists of a function party and a number of input parties. From Requirement \emph{Privacy} follows that FLAKE needs to ensure two privacy properties: \textbf{(i)} the data of uncorrupted input parties must kept private from any corrupted input party or the function party, and \textbf{(ii)} a corrupted function party must not be able to learn the number of features. 

If the function party and all input parties operate honestly, privacy properties (i) and (ii) are ensured. If all input parties have been corrupted by a semi-honest adversary, privacy cannot ensured. 
Between these extreme cases, we distinguish three cases for further analyses:

\begin{compactitem}
\item[(1)] A subset of the input parties is corrupted by a semi-honest adversary.
\item[(2)] The function party is corrupted by a semi-honest adversary. 
\item[(3)] The function and a subset of input parties are corrupted by a semi-honest adversary.
\end{compactitem}

Recall that we do not consider extreme scenarios. In particular, we exclude data distributions where the number of features or the training data of one or more input parties can be guessed, and protocols with only one input party. However, to make the guessing harder, the input parties generate a unique matrix $L$ in each iteration. Therefore, the function party can not determine if an input party updates their data in a subsequent iteration. Also, all-zero rows are not allowed; though these are usually discarded as part of preprocessing anyway. 



\subsection{Privacy Analysis}

Before we begin analysing the privacy of the protocol, we shall establish its correctness, which is unaffected by the existence of a semi-honest adversary.
\begin{proof}
Without loss of generality, we assume there are two input parties Alice and Bob with individual left inverses $L_A$ and $L_B$ of a common mask matrix $N$, whose outputs are $A'=AL_A(NN^T)^{\frac{1}{2}}$ and $B'=BL_B(NN^T)^{\frac{1}{2}}$. Then, the correctness of the protocol follows as below.
    \begin{equation*}
        \begin{split}A'B'^T&=AL_A(NN^T)^{\frac{1}{2}}(BL_B(NN^T)^{\frac{1}{2}})^T, \\ 
        &=AL_A(NN^T)^{\frac{1}{2}}(NN^T)^{\frac{1}{2}}L_B^TB^T,\\
        &=AL_A(NN^T)L_B^TB^T,\\
        &=A(L_AN)(L_BN)^TB^T,\\
        &=AB^T = (BA^T)^T.
        \end{split}
    \end{equation*}
Analogously, correctness follows for $AA^T$ and $BB^T$.
\end{proof}

We analyze \textbf{Case (1)} first. Since the input parties know the number of features, we only have to prove Property~(ii), i.e., 
a corrupted function party cannot learn the number of features.

\begin{theorem} \label{theorem2}
FLAKE is secure against a semi-honest adversary who corrupts a subset of the input parties. 

\end{theorem}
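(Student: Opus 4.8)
The plan is to argue via the standard simulation paradigm for semi-honest security, and the argument turns out to be short because of how little cross-party communication FLAKE actually involves. First I would pin down the view of the adversary controlling a subset $\mathcal{I}_{\mathrm{corr}}$ of the input parties. It consists of: (a) the corrupted parties' own inputs — their data matrices together with the randomness they sample locally, in particular their left inverses; (b) the shared seed $s$, hence the common matrix $N$ and the derived quantities such as $(NN^T)^{1/2}$; and (c) the transcript of the \emph{Distribution of Seed} stage, i.e.\ the public-key ciphertexts and signatures that are routed through the (honest) function party. The crucial structural observation is that masked data $A' = A L_A (NN^T)^{1/2}$ is transmitted \emph{only} to the function party, which is uncorrupted in Case~(1); no input party ever receives another input party's masked or unmasked data. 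Hence the adversary's view contains no message whose distribution depends on an honest input party's data.

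Next I would build a PPT simulator $\mathcal{S}$ that, given only the corrupted parties' inputs, samples a fresh seed $s$ exactly as the honest leader would (or simply uses the seed it generated itself, if the leader happens to be corrupted), derives $N$ from it, and produces the seed-distribution transcript by encrypting $s$ under the honest parties' public keys and signing under the appropriate keys. By the IND-CPA security of the public-key encryption scheme and the unforgeability of the signature scheme used by the PKI — an assumption already implicit in the trusted key-distribution setup invoked in the \emph{Distribution of Seed} stage — the simulated ciphertexts are computationally indistinguishable from the real ones, and every other component of the view is reproduced identically. Therefore the real and simulated views are computationally indistinguishable, which establishes the data-privacy property for Case~(1): no corrupted input party, even with the whole corrupted coalition pooling its information at the end of the protocol, learns anything about the data of an uncorrupted input party. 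The feature-count property is not at issue here, since the function party is honest and the input parties legitimately know $f$ (they jointly fix the dimensions of $N$ on horizontally partitioned data with a shared schema).

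The step I expect to require the most care is not a calculation but a modeling choice: making precise that the seed-distribution transcript is the \emph{only} channel between input parties, so that the ``corrupted parties may share their information at the end'' clause of the semi-honest model is handled cleanly — it is, because the joint view of all corrupted input parties is still just their own data plus the common seed plus ciphertexts of that same seed. If one prefers a purely information-theoretic, ad hoc presentation consistent with the rest of the paper, the same conclusion follows by simply remarking that every message an honest input party sends is either addressed to the function party or is an encryption of the public seed, so the corrupted coalition's transcript is (computationally, under IND-CPA) independent of honest data; exhibiting $\mathcal{S}$ is just the formal way of saying this.
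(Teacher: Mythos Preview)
Your proposal is correct. Both your argument and the paper's hinge on the same structural observation: in Case~(1) the only cross-party messages an input party ever sees are those from the seed-distribution stage, and masked data flows exclusively to the (honest) function party, so the corrupted coalition's transcript is independent of honest parties' data. Where you diverge is in formalism: the paper simply enumerates what an input party $P$ has access to (the common seed, $N$, and its own $L_P$), notes that $P$ never receives masked data or the Gram matrix, and concludes directly; there is no simulator, no invocation of IND-CPA, and no explicit treatment of the ciphertexts routed through the aggregator. Your route via the simulation paradigm is the standard cryptographic definition and is strictly more rigorous---in particular, it actually accounts for the seed-distribution ciphertexts and justifies why they leak nothing beyond the seed itself, a point the paper elides. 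The paper's version is shorter and suffices at the level of an ad~hoc argument (which you yourself sketch in your final paragraph), but your construction of $\mathcal{S}$ is what one would want if the claim were to be checked against a formal semi-honest security definition.
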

\begin{proof}
Let $S_U$ be the set of all input parties involved in the computation. While executing FLAKE protocol, an input party $P \in S_U$ has access only to the common mask N, the common seed used to generate N and the left inverse $L_P$ of N generated by P. At any point in FLAKE protocol, the input party P gets neither the masked data of other input parties nor the computed Gram matrix using the masked data of all input parties. Thus, A semi-honest adversary corrupting a subset of input parties $S_C \subset S_U$ cannot learn the data of non-corrupted input parties $S_H \subset S_U$ where $S_C \cap S_H = \emptyset$.

FLAKE is, therefore, secure against the semi-honest adversary corrupting a subset of input parties. Because a semi-honest adversary follows the protocol, the data provided by the corrupted input parties do not affect the result of the computation. \end{proof}

Regarding \textbf{Case (2)}, we need to prove that FLAKE does not allow a semi-honest function party to learn (i) input data nor (ii) the number of features. 

\begin{theorem} \label{theorem3}
    FLAKE is secure against a semi-honest adversary who corrupts the function party.
\end{theorem}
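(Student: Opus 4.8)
The plan is to prove both halves of the claim — that a corrupted function party learns neither (i) any input party's data nor (ii) the number of features $f$ — by showing that its entire view (the masked matrices $A',B',C',\dots$ and everything derived from them) is already determined by the Gram matrix $G$ together with the publicly revealed row counts $n_A,n_B,\dots$ and the common width $k$ (which is exactly the output it is entitled to), and by exhibiting many distinct configurations — data sets, feature counts, masks — that all reproduce the same view.

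First I would isolate the one algebraic fact everything rests on. Writing $M_P := L_P(NN^T)^{1/2}$ for input party $P$, the identity $L_PN=I_f$ together with the symmetry of the PSD square root gives, for any two input parties $P,Q$,
\[
M_PM_Q^{T}=L_P(NN^T)^{1/2}(NN^T)^{1/2}L_Q^{T}=L_P(NN^T)L_Q^{T}=(L_PN)(L_QN)^{T}=I_f .
\]
So each $M_P\in\mathbb{R}^{f\times k}$ has orthonormal rows and, as in the correctness proof, $P'=PM_P$ satisfies $P'Q'^{T}=PQ^{T}$ for every pair; stacking the $P'$ into $D'\in\mathbb{R}^{N\times k}$ this reads $D'D'^{T}=G=DD^{T}$, where $D$ is the stacked true data. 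Since $\operatorname{rank}(D')=\operatorname{rank}(D'D'^{T})=\operatorname{rank}(G)=:r\le f<k$, the view $D'$ is a width-$k$, rank-$r$ square root of $G$: the function party recovers $G$ from it, and the only remaining question is whether $D'$ pins down anything more.

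For (ii), and in fact for most of (i), I would show directly that $D'$ is consistent with a legitimate FLAKE execution on \emph{any} feature count $\tilde f$ with $r\le\tilde f<k$. Choose $\tilde M\in\mathbb{R}^{\tilde f\times k}$ with orthonormal rows whose row space contains $\operatorname{rowspace}(D')$ — possible since the latter has dimension $r\le\tilde f$ — and set $\tilde D:=D'\tilde M^{T}$, with $\tilde P'$ the corresponding row blocks. Because $\tilde M^{T}\tilde M$ is the orthogonal projector onto $\operatorname{rowspace}(\tilde M)\supseteq\operatorname{rowspace}(D')$, one gets $\tilde D\tilde M=D'$, hence $\tilde P'=\tilde P\tilde M$ reproduces exactly the observed masked matrices; and $\tilde M$ is realized as $\tilde L_P(\tilde N\tilde N^{T})^{1/2}$ with common mask $\tilde N:=\tilde M^{T}$ and left inverses $\tilde L_P:=\tilde M$, so this is a valid execution with $\tilde f$ features and underlying data $\tilde D$. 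As $\tilde f$ ranges over the nontrivial interval $\{r,\dots,k-1\}$ — we are outside the excluded extreme cases where the geometry of the data would itself force $f$ — the function party cannot determine $f$; and already for fixed $\tilde f=f$, varying $\tilde M$ produces a large family of data sets $\tilde D$ (in particular $DQ$ for every feature-space rotation $Q$), all indistinguishable to it, so it does not learn the input data.

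For a genuine simulation statement one writes $D'=U_r\Lambda_r^{1/2}O$ via the thin eigendecomposition $G=U_r\Lambda_rU_r^{T}$, so that a simulator holding only $G$, the $n_P$'s and $k$ recreates a view with the same block structure and Gram matrix by sampling the semi-orthogonal factor $O$ on the relevant Stiefel manifold; and the per-iteration re-randomization of the $L_P$ means successive iterations only ever hand the function party fresh such square roots of (blocks of) $G$, so nothing accumulates. I expect the main obstacle to be turning "the view is a function of $G$" into an indistinguishability proof with exactly matched distributions: this forces one to fix the laws from which $N$ and the $L_P$ are drawn and to verify that the induced distribution of $O$ depends on $G$ alone (ideally that it is Haar on the Stiefel manifold) — routine but slightly delicate, and left implicit by the informal protocol description. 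If one is content with the weaker conclusion that the family of data sets and feature counts consistent with the function party's view is large and $f$-independent — which already establishes the theorem as stated — the construction of the third paragraph delivers it with no distributional assumptions.
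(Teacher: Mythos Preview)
Your argument is correct and in fact goes further than the paper's. The paper's proof is essentially an ad--hoc non--uniqueness argument: it observes that the dimensions of $A'$ do not expose $f$, and that for any orthogonal $O\in\mathbb{R}^{f\times f}$ the substitutions $\tilde A=AO$, $\tilde L_A=O^{T}L_A$ (resp.\ $\tilde A=AO$, $\tilde B=BO$) reproduce the same $A'$ (resp.\ the same Gram blocks $AA^{T},BB^{T},AB^{T}$), so the function party can at best recover the left singular data of $A$ and never $A$ itself.

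You take a structurally different route. Your key lemma $M_PM_Q^{T}=I_f$ (hence every $M_P$ has orthonormal rows) lets you identify the function party's \emph{entire} view $D'$ as a width--$k$ square root of the Gram matrix $G$, and you then construct, for every $\tilde f$ with $\operatorname{rank}(G)\le\tilde f<k$, an explicit legitimate FLAKE execution (data $\tilde D=D'\tilde M^{T}$, common mask $\tilde N=\tilde M^{T}$, left inverses $\tilde L_P=\tilde M$) that reproduces each observed block $P'$ exactly. This buys you two things the paper does not attempt: an actual argument that $f$ is hidden (rather than ``trivial''), and a clean reduction of the whole view to $G$ that sets up a simulation--style statement. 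Your closing caveat is apt: to upgrade from ``many consistent executions'' to distributional indistinguishability one must fix the sampling laws for $N$ and the $L_P$ and check that the induced law of the semi--orthogonal factor depends on $G$ alone; the paper does not address this either. The only edge case to keep in mind is that your $\tilde f$--range collapses when $\operatorname{rank}(G)=k-1$, which you rightly file under the excluded degenerate configurations; if one wants that case covered unconditionally, the protocol would need $k\ge f+2$ rather than merely $k>f$.
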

\begin{proof}
A semi-honest function party is only the receiver of the masked data from the input parties, and follows the protocol as intended. Without loss of generality, let there be two input parties Alice and Bob with input data $A \in \mathbb{R}^{n_A \times f}$ and $B \in \mathbb{R}^{n_B \times f}$, respectively, where $n_x$ is the number of samples in the corresponding party and $f$ is the number of features. The semi-honest function party receives the masked input matrices of them, which are $A'=AL_A(NN^T)^{\frac{1}{2}} \in \mathbb{R}^{n_A \times k}$ and $B'=BL_B(NN^T)^{\frac{1}{2}} \in \mathbb{R}^{n_B \times k}$ where $k > f$. Then, it computes $A'B'^T = AB^T \in \mathbb{R}^{n_A \times n_B}$, $A'A'^T = AA^T \in \mathbb{R}^{n_A \times n_A}$ and $B'B'^T = BB^T \in \mathbb{R}^{n_B \times n_B}$. The data that the function party has access to then includes 
\begin{enumerate}
    \item[$(a)$] $A'$ and analogously, $B'$.
    \item[$(b)$] $AB^T=(BA^T)^T$, $AA^T$ and analogously $BB^T$.
\end{enumerate}
Regarding $(a)$, it is trivial that $A'$ does not reveal the number of features of $A$. We now show that $A'$ is not produced by a unique matrix $A$. Given an orthogonal matrix $O \in \mathbb{R}^{f\times f}$ with $f>1$, for $\tilde{A}=AO$ and $L_{\tilde{A}}=O^TL_A$, we have $A'=\tilde{A}(L_{\tilde{A}}(NN^T)^{\frac{1}{2}}$. Further, since we require that not all entries of any one sample is full of zeroes, the function party cannot deduce anything about $A$ from $A'$. 
\\~\\ 
Regarding $(b)$, the matrices that produce these Gram matrices are not unique, since for any orthogonal matrix $O \in \mathbb{R}^{f\times f}$ where $f>1$, labeling $\tilde{A}=AO$ and $\tilde{B}=BO$, we have
\begin{gather*}
    \tilde{A}\tilde{A}^T=AA^T,\quad
        \tilde{B}\tilde{B}^T=BB^T,\quad
                \tilde{A}\tilde{B}^T=AB^T.
\end{gather*}
In consequence, the function party only learns the singular values and singular vectors of the matrices, i.e., it can find $U$ and $S$ from the singular value decomposition $A=USV^T$ by eigen-decomposing $AA^T$. However, these values are insufficient to solve for $A$ since we can generate countless number of different orthogonal matrices \citep{aguilera2004general}. The function party learns neither (i) input data nor (ii) the number of features. 

Although the function party obtains the Gram matrix, it cannot deduce the samples used to compute this Gram matrix, which was shown by \citep{unal2021escaped}. Details can be found in the supplementary material. 

\end{proof}

\textbf{Case (3)} means that not only the function party, but also a subset of the input parties has been corrupted by a semi-honest adversary. In this case, since the adversary knows $N$, the privacy of the data of the other parties is compromised since for data from a non-corrupt party Charlie of the form $C'=CL_C(NN^T)^{\frac{1}{2}}$, the adversary can obtain $C$ by multiplying the data with $(NN^T)^{\frac{1}{2}}L^T$.

\section{Experiments}
\label{sec:experiments}

\subsection{Implementation}
In this section, we evaluate the performance of FLAKE and provide a run-time analysis.

We experiment with three clinical data sets which contain medical records and, thus, have strong privacy concerns \citep{wolberg1992breast, unal2019framework, UCI-ML-Repository}. All of them are suitable for classification tasks. For the run-time analysis, we experimented with a synthetic data set with $\{500, 1000, 2000, 4000, 8000\}$ data points  (dp) for each input party.
Details about their statistics can be found in the supplementary material. 

Before starting with the run-time experiments, we want to compare FLAKE to other methods for randomization-based kernel computation for horizontally shared data. For this purpose, we implemented a 5-fold cross validation with FLAKE, ESCAPED \citep{unal2021escaped}, PPSVM \citep{yu2006privacy}, RSVM \citep{lin2015secure} and a naive SVM classifier in Python. Our experiments show that FLAKE, ESCAPED and the naive classifier produces the same results as they are exact solutions. Because of the introduced stochasticity, RSVM and PPSVM have a performance almost as good as the naive classifier, but they are not exact. Furthermore, the overhead associated with the various methods was measured for a single node and 1000 data points. The overhead for all methods was found to be extremely low, to the point of being negligible. Therefore, the subsequent experiments will primarily focus on scaling up the number of data points and input parties for FLAKE and ESCAPED, the two exact methods. For further details see the supplementary material. 

We implemented FLAKE for a scenario with three input parties and one function party. 
To mimick the network communication between input parties and function party, we have implemented each party as an isolated process that communicates with others via TCP connections. Our four data sets are divided into three disjoint partitions. Each partition is assigned an input party. Each input party then masks its data according to the FLAKE protocol, and splits the masked data into chunks.
After that, each input party compresses the chunks by zlib's Deflate-algorithm, and forwards the compressed chunks to the function party. The function party deflates the chunks, computes the Gram matrix and a polynomial kernel. 
Finally, a SVM is trained with a 5-fold cross-validation. A grid search optimizes the corresponding hyperparameters $C \in \{2^{-4}, ..., 2^{10}\}$ (misclassification penalty) and $p \in \{1, ..., 5\}$ (degree). 

All experiments were executed on a host with an AMD 7713 with 2.0GHZ and 512 GB of memory, which is a typical stand-alone server configuration for a small datacenter. We have used a single-threaded implementation. We repeated each experiment 10 times. 

\subsection{Run-time Analysis}

We want to confirm that training time, masking time, communication time, gram-computation time and update time do not limit the applicability of FLAKE. 
%
As known from literature, SVMs typically do not scale readily to very large data sets. In a centralized scenario, it is the training time for the SVM that limits the size of the input data. 
We declare success, if we can show that the run-times of the stages of FLAKE in a federated scenario are negligible, compared to the stages required for the federated training of a SVM without masking. 
%
%

\paragraph{Training Time}
The training takes place at the function party. 
Figure \ref{runtimes4} shows the training time for varying numbers of dp in our synthetic data set. As expected, the longest takes the training of the data set with 8000 dp with 516.62 ($\pm$ 2.45) on average. Recall that 8000 dp means that each of our three input parties sends a masked data set of this size to the function party.

\paragraph{Masking Time}
To find out how much masking burdens the input parties, we ran a series of experiments, again with the synthetic data set. We varied the number of dp and measured the time for masking. Figure \ref{runtimes1} reports the masking time measured for one input party. Even with 8000 dp per input party, the execution takes less than $0,003$ ($\pm$ 0.0001) seconds on average. This masking time is negligible, compared to the time to train the SVM model, and does not restrict the applicability of FLAKE.

\begin{figure*}
\centering
\begin{subfigure}{.31\linewidth}
  \centering
  \includegraphics[width=\linewidth]{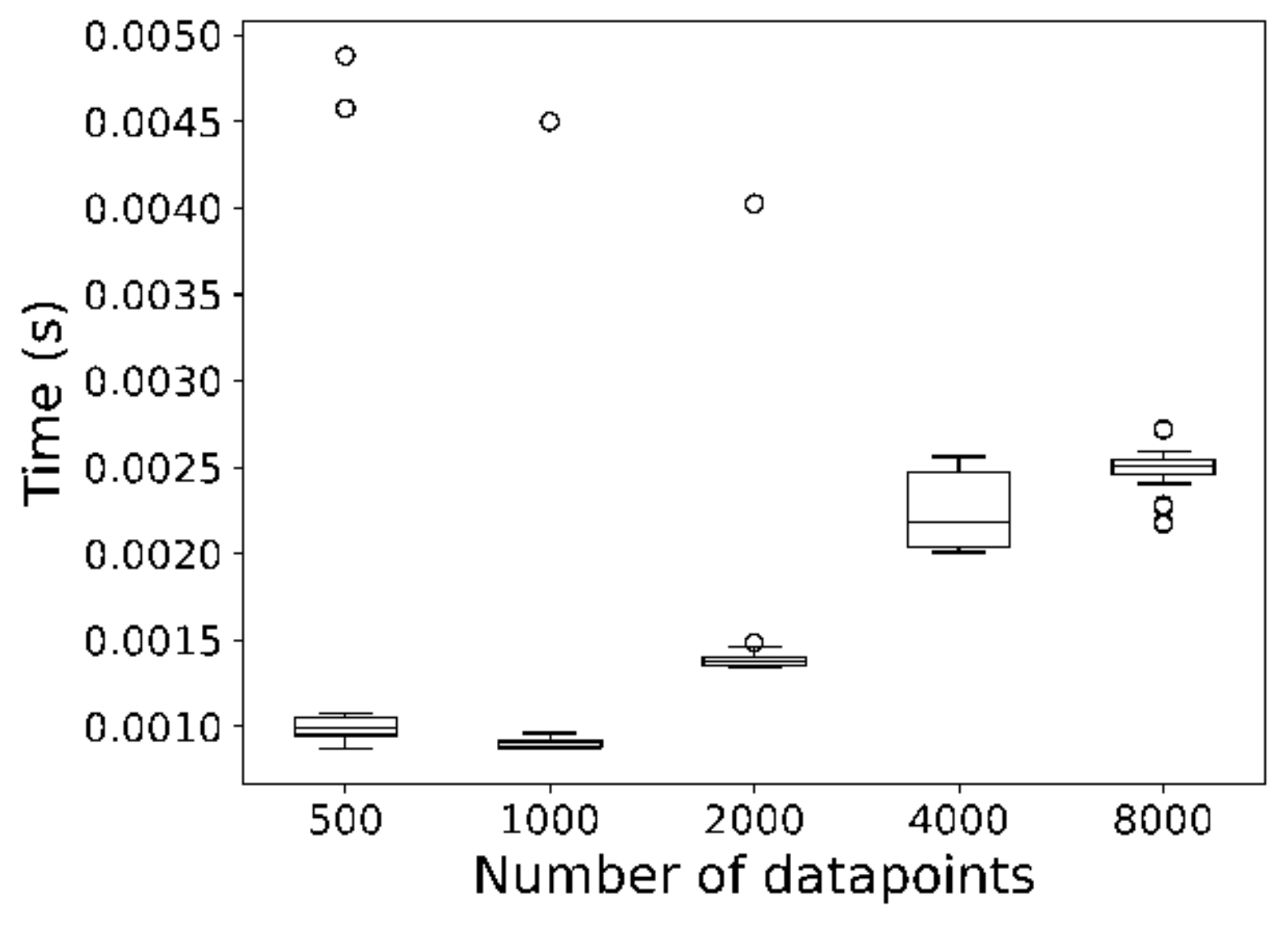}
  \caption{Masking Time}
  \label{runtimes1}
\end{subfigure} \quad
\begin{subfigure}{.31\linewidth}
  \centering
  \includegraphics[width=\linewidth]{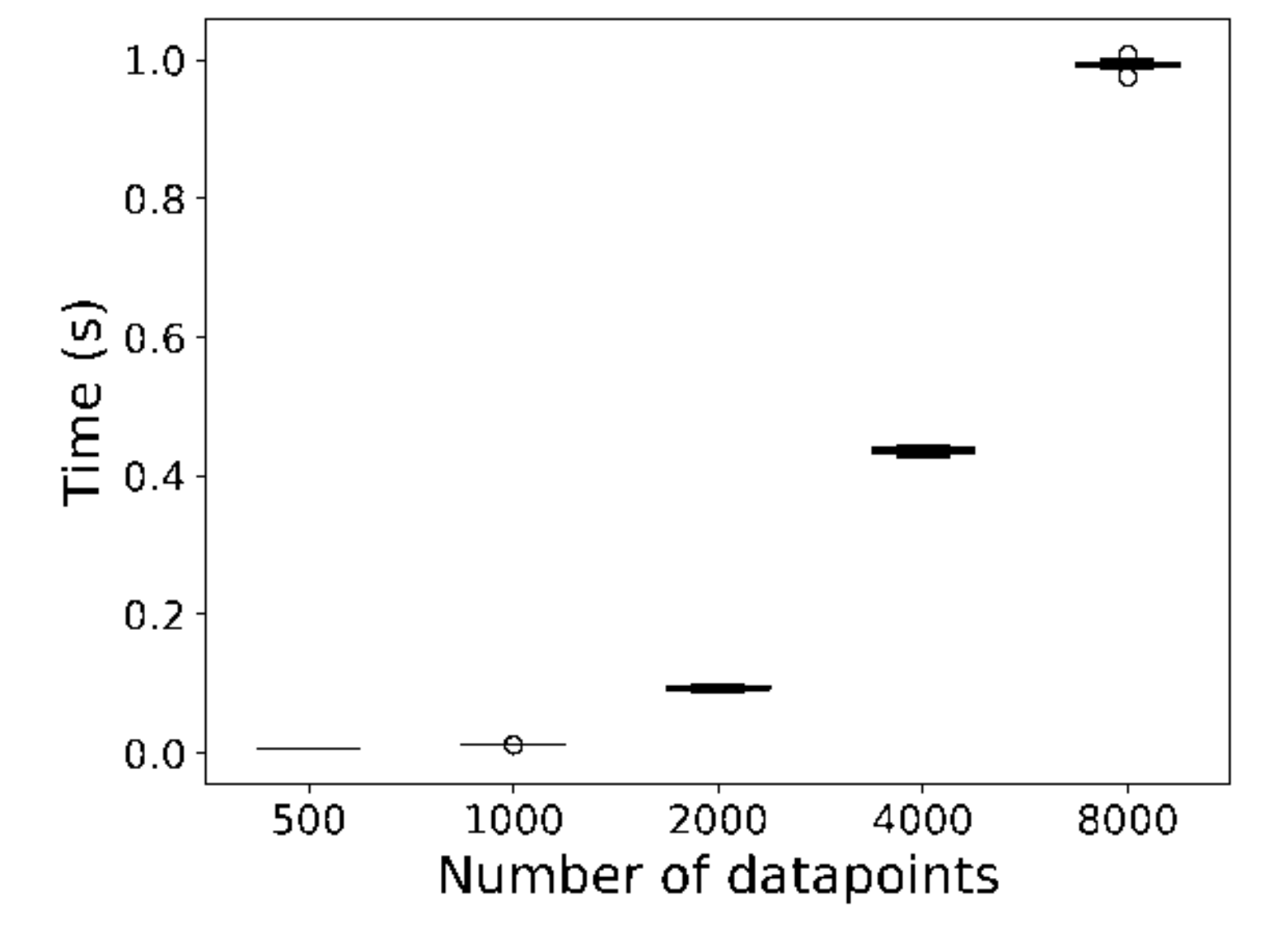}
  \caption{Gram-Computation Time}
  \label{runtimes3}
\end{subfigure} \quad
\begin{subfigure}{.31\linewidth}
  \centering
  \includegraphics[width=\linewidth]{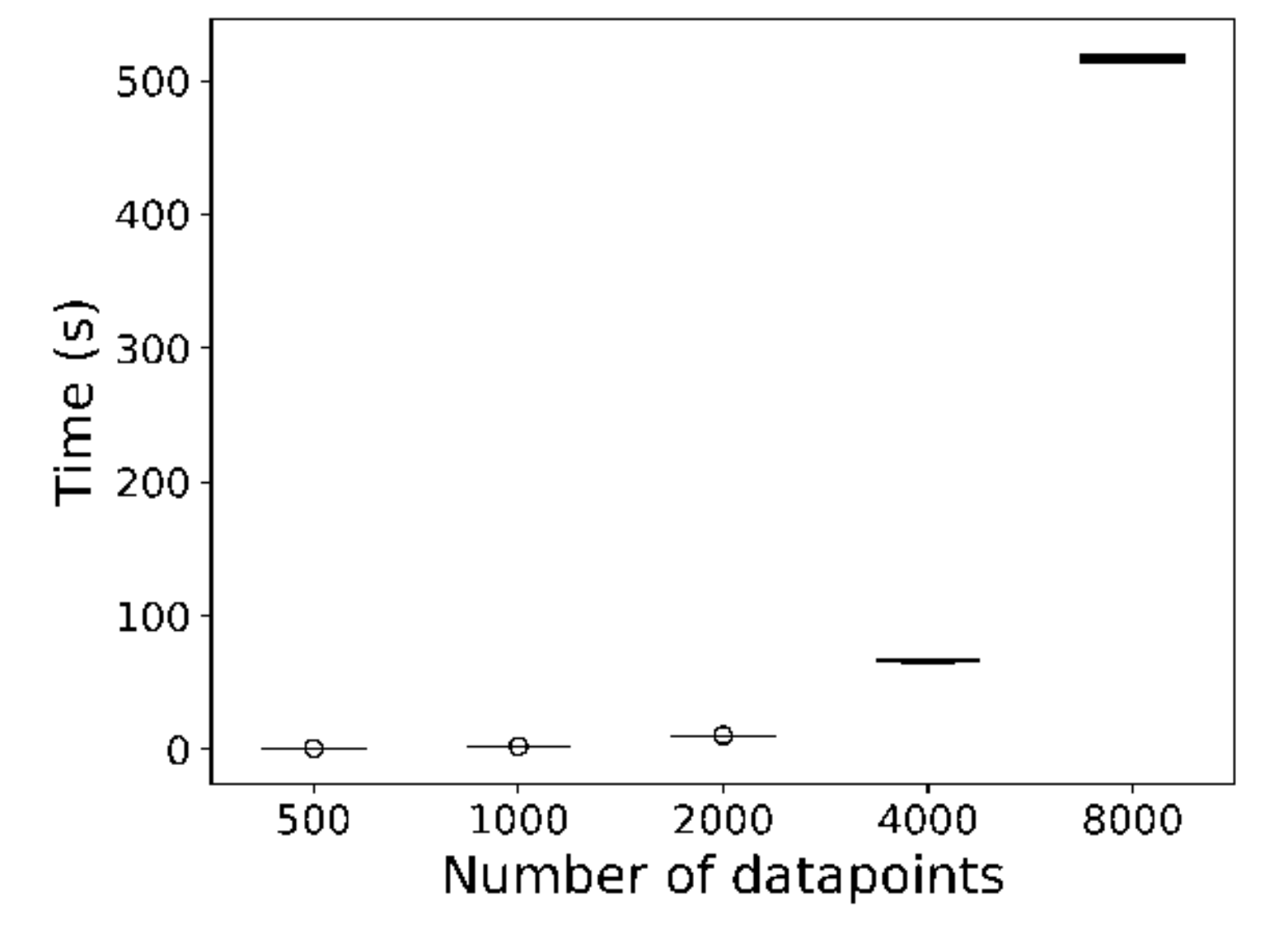}
  \caption{Training Time}
  \label{runtimes4}
\end{subfigure}
\vskip 0.1in
\caption{Run times of FLAKE masking data sets of 1000 - 8000 dp for three input parties each. a) Time to mask the data for one input party. b) Time for computing the Gram matrix from the masked data. c) Training time of SVM}
\label{fig:fig}
\end{figure*}

\paragraph{Communication Time} 

Because our implementation runs on a data-center host, we estimate the communication time needed to send masked data from the input parties to the function party. The communication time~$T$ can be estimated as shown in Equation~\ref{eq:transmi}: 
\begin{equation} 
\label{eq:transmi}
    T = \frac{ \text{Datasize}}{\text{Bandwidth}} + \text{Latency} \times (1+ \text{Packetloss})
\end{equation}
Our largest data set consists of 8000 points, which adds up to a Datasize of 1.31 MB for each input party. A typical VPN has a Bandwidth of 1.25MBps, with an average Latency of 0.1s and a Packetloss of 2\% \citep{Ookla2022}. For this set of parameters, the estimated the communication time is 1.05 seconds. Without Latency and Packetloss, it is 1.048 seconds. Recall that our experiments are executed on a single data-center host, i.e., the actual data transfer takes place as inter-process communication in the main memory of the host and virtually requires no time.

\paragraph{Gram-Computation Time}

We also measured the time the function party needs to compute the Gram matrix from the masked data from the input parties. Figure~\ref{runtimes3} shows that the computation time increases slightly more than linearly with the size of the data set, with no outliers. For 8000 dp, it took 0.99 ($\pm$ 0.0083) seconds on average to compute the Gram matrix. Again, 8000 dp means the function party receives 3x8000 masked data sets from our three input parties. In summary, we have confirmed that the Gram-computation time does not contribute much to the total computation time. 


\paragraph{Update Time}

Having shown that the time required to mask the data, send them to the function party, and compute the Gram matrix is several orders of magnitude below the time to train the model, we now consider updating the model.
To mimick a typical Federated Learning use case, where the training data increases due to dynamic data collection after the initial training, the data sets were updated with additional data in multiple training iterations.

\begin{figure}[ht]
\begin{minipage}[b]{0.5\linewidth}
 In particular, we performed multiple training iterations starting with a synthetic data set with 1000 dp for each input party. Figure \ref{federated} reports the run-times for masking the data and computing the Gram matrix for a three party scenario. We compared four training iterations of FLAKE and ESCAPED \citep{unal2021escaped}, where 1000 dp are added in each iteration. The experiment is measured in the same way as for the other diagrams. The figure confirms that FLAKE outperforms ESCAPED. In particular, masking with ESCAPED takes much more time. We conclude that updating the training data in FLAKE is an inexpensive operation and, thus, can be successfully applied in a FL setting.
\end{minipage}%
\hfill
\begin{minipage}[b]{0.45\linewidth}
\centering
\includegraphics[width=\columnwidth]{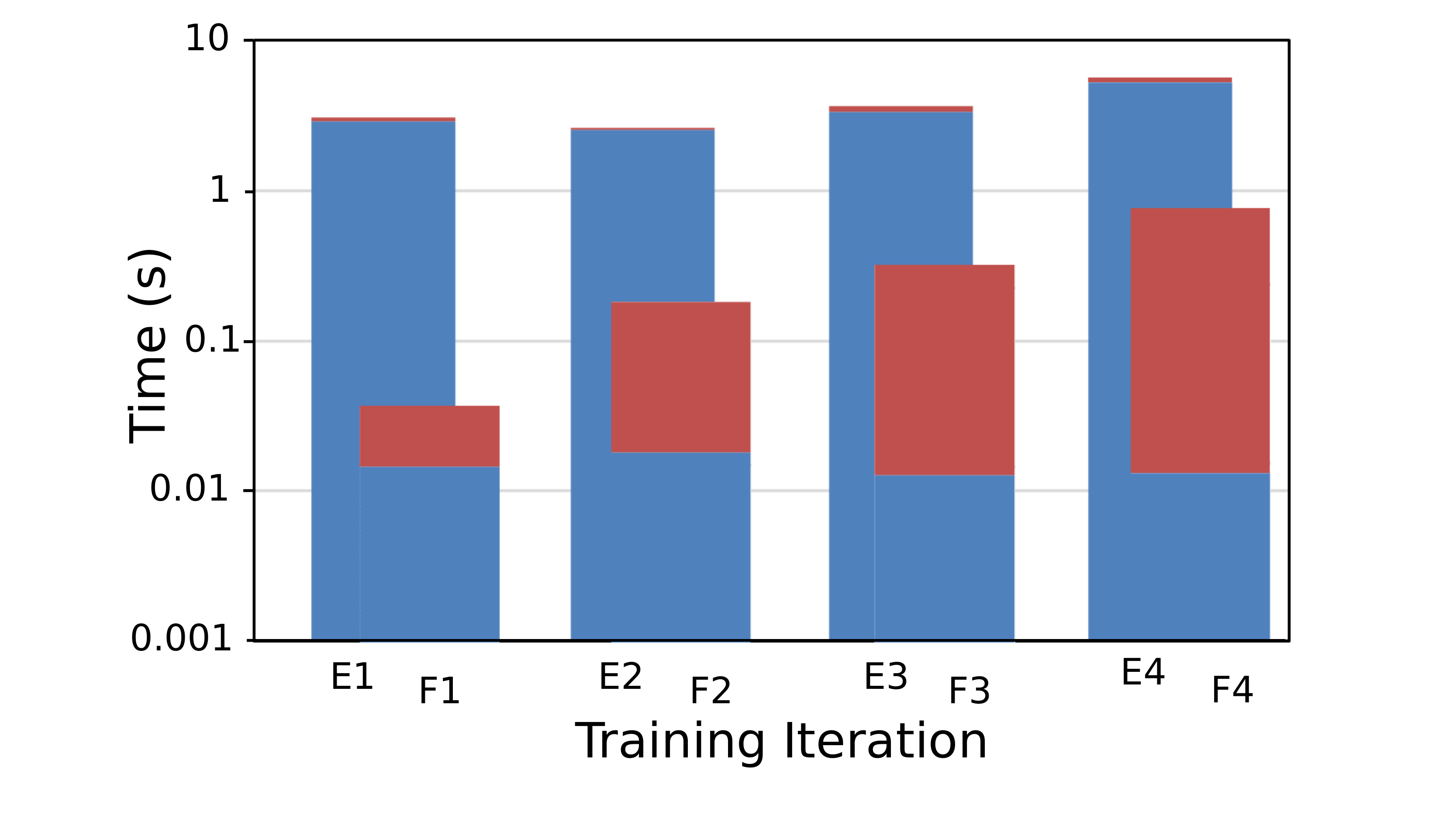}
\caption{Run-times (s) for calculation of Gram matrix (red) and for masking of data (blue) with FLAKE (F) and ESCAPED (E).}
\label{federated}
\end{minipage}
\end{figure}


\subsection{Discussion}

Many privacy-preserving machine learning methods ensure privacy by adding stochasticity, which decreases the result quality (privacy $\sim$ utility trade-off) \citep{chen2005privacy, chen2007towards, lin2015secure, lin2013privacy}. In contrast, the function party in FLAKE obtains an exact Gram matrix (Requirement \emph{Accuracy}), that can be used to compute any desired kernel matrix and later train any kernel-based machine learning algorithm, as if it was centralized data. 
ESCAPED, which provides an accurate solution as well, requires more communication between the parties, which results in longer execution times \citep{unal2021escaped}. As shown in section \ref{sec:experiments}, FLAKE is more efficient due to less communication rounds. Also, FLAKE allows input parties to update the Gram matrix with new samples independently of the previous samples. In ESCAPED, updating the Gram matrix with new samples is not supported. Instead, the Gram matrix must be recomputed using all the samples that the input parties have. After all, FLAKE has various advantages over preceding work using the randomized masking approach.


\section{Conclusion}
\label{sec:conclusion}
Federated learning is an essential aspect of distributed machine learning, particularly when data privacy is a primary concern. However, when implementing both Federated Learning and privacy-preserving methods, the quality of model training can suffer as a result. In this work, we have proposed FLAKE, a Federated Learning Approach for KErnel methods, as a solution to that challenge.  Our approach allows for the efficient and private computation of the Gram matrix from data that is distributed on multiple sources, enabling the training of kernel-based machine learning algorithms without any trade-offs in utility. 
Initially, four requirements were formulated, of which we have shown that FLAKE satisfies them: \emph{Privacy}, \emph{Accuracy}, \emph{Updatability} and \emph{Efficiency}. We showed, that FLAKE is both correct and private with regard to the considered threat models. We conducted various experiments on benchmark data sets to show FLAKE meets the accuracy and correctness of centralized models. Besides conducting experiments on well-known data sets, we also replicated the experiments of \citep{unal2019framework} on HIV V3 Loop Sequence data. While other privacy-preserving techniques can be computationally expensive, FLAKE is quite efficient. An analysis of FLAKE and comparable approaches shows, that FLAKE is not as computationally expensive. In order to expand the capabilities of the framework, additional common machine learning operations could be incorporated as future developments. Also, the masking and processing of vertically shared data could be included in FLAKE. 

We believe that FLAKE has the potential to improve healthcare outcomes and reduce costs while addressing the privacy concerns associated with machine learning on clinical data. We also think that it may find many use cases in other application domains that handle sensitive, distributed data. 


{
\small

\bibliography{main}
\bibliographystyle{apalike}

\medskip
}

\section{Supplementary Material}

\subsection{Privacy Proof}
The following proof is based on a proof by \cite{unal2019framework}.

\begin{theorem} \label{theorem4}
FLAKE provides security against a malicious function party A, assuming A is either semi-honest or malicious and does not collude with any input parties. In this scenario, A is unable to deduce the data of the input parties from the Gram matrix $G$ that is generated as a result.
\end{theorem}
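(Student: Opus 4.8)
The plan is to show that the map $\Phi$ which sends the tuple of input‑party data matrices to the resulting Gram matrix $G$ has, over every value it attains, a preimage consisting of a continuum of genuinely different tuples; consequently a function party $A$ that holds $G$ — together with the public block sizes $n_A,n_B,\dots$ — has no way of singling out the true one. I would first observe that it suffices to analyse what $G$ itself reveals. In \fw{} the function party only receives masked matrices and, in the seed phase, relays signed, encrypted messages it can neither read nor alter, so a malicious function party learns nothing about the input data beyond what a semi‑honest one does: its view is the masked matrices $A'=AL_A(NN^T)^{1/2}$, $B'=BL_B(NN^T)^{1/2},\dots$, from which it forms the blocks $A'A'^T=AA^T$, $A'B'^T=AB^T$, and so on. It knows neither the common mask $N$ nor any left inverse $L_P$; whatever the masked matrices reveal about the raw data beyond $G$ is addressed in Theorem~\ref{theorem3}, so here I concentrate on $G=DD^T$ where $D=[A^T\ B^T\ \cdots]^T\in\mathbb{R}^{N\times f}$ with $N=n_A+n_B+\cdots$, and where the blocks computed by the function party are exactly the submatrices of $G$ cut out by the public partition of the $N$ rows.

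Next comes the core step, which reuses the orthogonal gauge invariance already exploited in the proof of Theorem~\ref{theorem3}: for every orthogonal $O\in\mathbb{R}^{f\times f}$ — and $f>1$ by assumption — the tuple $(AO,BO,\dots)$, i.e. $\tilde D=DO$, satisfies $\tilde D\tilde D^T=DOO^TD^T=DD^T=G$ and preserves each block, because $O$ acts on the feature axis and not on the sample axis; explicitly $\widetilde{A}\widetilde{A}^T=AA^T$, $\widetilde{A}\widetilde{B}^T=AB^T$, and likewise for the rest. Thus $\Phi^{-1}(G)$ contains the whole $O(f)$‑orbit of $D$, which for $f>1$ is a manifold of positive dimension $f(f-1)/2$ and hence infinite; for any admissible $D$ (non‑degenerate, with no all‑zero sample, as the threat model assumes) the tuples in this orbit are pairwise distinct, since two of them coincide only when $D$ is fixed by the corresponding $O$. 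Therefore no exhaustive search terminates with a unique candidate, and $A$ cannot deduce the true $(A,B,\dots)$ from $G$.

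Finally I would pin down exactly what $A$ does learn, to confirm that nothing stronger leaks, and then conclude: from an eigendecomposition $G=Q\Lambda Q^T$ the function party recovers the canonical factor $\hat D=Q_r\Lambda_r^{1/2}$ with $r=\operatorname{rank}(G)$ — equivalently the singular values and left singular vectors of $D$, and, reading the diagonal and off‑diagonal blocks, of every party's data together with their relative orientation — but the right singular vectors of $D$, which are precisely the $O(f)$ gauge freedom above, remain completely undetermined, so $A,B,\dots$ are not individually recoverable; this is the conclusion drawn for ESCAPED in \citep{unal2021escaped}, whose proof this adapts. The main obstacle is the bookkeeping in this last step, not any new idea: when the nominal feature count $f$ exceeds $r=\operatorname{rank}(G)$ one must describe $\Phi^{-1}(G)$ via row‑orthonormal matrices $W\in\mathbb{R}^{r\times f}$ with $WW^T=I_r$ rather than a square orthogonal $O$ (together with the choice of how a rank‑$r$ factor sits inside $f$ columns), and one must keep the no‑collusion hypothesis explicit throughout — if the function party colluded with any input party it would know $N$, hence $(NN^T)^{1/2}L^T$, and could invert the masking outright, which is exactly the excluded Case (3). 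Modulo that, the argument is the orthogonal‑invariance observation of Theorem~\ref{theorem3} applied to the entire Gram matrix rather than to a single block.
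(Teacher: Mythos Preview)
Your proposal is correct and follows the same orthogonal--invariance idea as the paper: exhibit an infinite family of data tuples all producing the same Gram matrix, hence the function party cannot single out the true one. Your formulation---right--multiplying $D$ by an orthogonal $O\in\mathbb{R}^{f\times f}$ on the feature axis so that $\tilde D\tilde D^T=DD^T$---is in fact the clean version of the argument (and matches what the paper already does in Theorem~\ref{theorem3}); the paper's supplementary proof states the same rotation idea but with some dimensional slippage between $DD^T$ and $D^TD$, so your write--up is, if anything, more precise than the original while being the same approach in substance.
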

\begin{proof}

Although the number of features are hidden by FLAKE, we assume now the full Gram matrix $G=DD^T$ with the data of the input parties $D=[A, B, C]$ and the number of features are known to the function party. We show, that an attacker could not obtain any data since it there are multiple matrices that result in the Gram matrix. 

Assume that there is a rotation matrix $R \in \mathbb{R}^{N \times N}$ where $N=2(n_a + n_b + n_c)$ with $n_x$ is the number of samples in the corresponding party. Then, there is a matrix $E$ which can be computed by $E = R^{-1}D$. From that, we can say that $D=RE$. Then, due to the rotation property of $R^{-1}=R^T$, the the following holds:

\begin{equation*}
    \begin{split}
        K & = D^T D \\
        & = (RE)^T(RE)\\
        &=E^TR^TRE \\
        &= E^TR^{-1}RE\\
        &= E^TE\\
    \end{split}
\end{equation*}
\end{proof}

Since \cite{aguilera2004general} showed, that countless rotation matrices can be generated, we cannot obtain a unique matrix resulting in Gram matrix $G$: For every new rotation matrix $\theta \in \mathbb{R}^{N \times N}$, there exists a new matrix $\beta = \theta^{-1}D$ satisfying $G = \beta^T\beta$. Thus, A is unable to deduce the input parties' data $D=(A, B, C)$ from $G=D^DT$.

\newpage

\subsection{Supplementary Experiments}

All methods employed a polynomial kernel and identical hyperparameter settings. For this implementation, Sequential Minimal Optimization (libsvm) provided by scikit-learn was used \cite{zeng2008fast}. Since the Pima Indian diabetes data set, HIV and Breast Cancer data set have an unbalanced distribution of classes, we have applied Macro Averaging. Correspondingly, for the balanced synthetic data set, Micro Averaging.

\begin{table}[h]
\caption{statistics of data sets used in the experiment section}
\label{scores}
\begin{center}
\begin{scriptsize}
\begin{sc}
\begin{tabular}{lccccr}
\toprule
& \multicolumn{2}{c}{Naive} & \multicolumn{2}{c}{FLAKE} \\
\midrule
Data set & Number of data points & Number of Features & binary/multi - label & distribution  \\
\midrule
Diabetes    & 768 & 8 & binary & in-balanced \\
Cancer  & 569 & 10 & binary & in-balanced \\
HIV     & 766 & 924 &  binary & in-balanced \\
Synthetic    & 500-8000 & 20 & multi class & balanced       \\
\bottomrule
\end{tabular}
\end{sc}
\end{scriptsize}
\end{center}
\end{table}

\begin{table}[h]
\caption{ROC AUC with standard deviation for Naive SVM, FLAKE, ESCAPED, RSVM, PPSVM on various data sets.}
\label{scores}
\begin{center}
\begin{scriptsize}
\begin{sc}
\begin{tabular}{lccccc}
\toprule
 & Naive & FLAKE & ESCAPED & RSVM & PPSVM \\
\midrule
Diabetes        & 0.97$\pm$ 0.04 & 0.97$\pm$ 0.04 & 0.97$\pm$ 0.04 & 0.95$\pm$ 0.02 & 0.94$\pm$ 0.04\\
Cancer  & 0.97 $\pm$ 0.03 & 0.97 $\pm$ 0.03 & 0.97 $\pm$ 0.03 & 0.96 $\pm$ 0.02 & 0.97 $\pm$ 0.04 \\
HIV       & 0.78$\pm$ 0.03 & 0.78$\pm$ 0.03 & 0.78$\pm$ 0.03 & 0.65$\pm$ 0.17 & 0.64$\pm$ 0.10\\
Synthetic  & 0.97$\pm$ 0.01 & 0.97$\pm$ 0.01 & 0.97$\pm$ 0.01 & 0.83$\pm$ 0.04 & 0.95$\pm$ 0.01\\
\bottomrule
\end{tabular}
\end{sc}
\end{scriptsize}
\end{center}
\end{table}

\begin{table}[h]
\caption{Overhead (Masking time + Gram time) for FLAKE, ESCAPED, RSVM, PPSVM for three input parties with 1000 dp each.}
\label{scores}
\begin{center}
\begin{scriptsize}
\begin{sc}
\begin{tabular}{lccccc}
\toprule
& FLAKE & ESCAPED & RSVM & PPSVM \\
\midrule
Masking time for one IP      & 0.00146 &  1.23610 & 0.00201 &  0.02257\\
Time to compute Gram   & 0.02071 & 0.03156 & 0.00530 & 0.01121 \\
\bottomrule
\end{tabular}
\end{sc}
\end{scriptsize}
\end{center}
\end{table}


\end{document}